\begin{document}
\title{Approximating  solution structure of the Weighted Sentence Alignment problem} 
\author{Antonina Kolokolova \and Renesa Nizamee} 
\institute{Memorial University of Newfoundland \\ 
\{kol,mrn271\}@mun.ca} 

\maketitle
\begin{abstract}

We study the complexity of approximating solution structure of the bijective weighted sentence alignment problem of DeNero and Klein (2008). In particular, we consider the complexity of finding an alignment that has a significant overlap with an optimal alignment. 

We discuss ways of representing the solution for the general weighted sentence alignment as well as phrases-to-words alignment problem, and show that computing a string which agrees with the optimal sentence partition on more than half (plus an arbitrarily small polynomial fraction) positions for the phrases-to-words alignment is NP-hard. For the general weighted sentence alignment we obtain such bound from the agreement on a little over 2/3 of the bits. 

Additionally, we generalize the Hamming distance approximation of a solution structure to approximating it with respect to the edit distance metric,  obtaining similar lower bounds.

\end{abstract} 

\begin{keywords} Phrase alignment, approximation, Hamming distance, edit distance, lower bounds  \end{keywords}

\section{Introduction}

The phrase alignment problem arises in the context of machine translation and natural language inference \cite{maccartney2008phrase}.  It is a common task in these areas to determine  whether one sentence can be converted into another by replacing blocks of text with semantically equivalent blocks, and possibly changing the order of the blocks.  For example, the sentence ``The president of the USA spoke on New Year's day'' and the sentence ``On January 1st, Obama gave a talk''  convey the same information;  we can convert the former into the latter by replacing ``the president of the USA'' with ``Obama'', ``on New Year's day'' with ``on January 1st'' and ``spoke'' with ``gave a talk''. 

Following the setting of DeNero and Klein \cite{phrase-align}, we call the two sequences of words (tokens) to be aligned ``sentences'', a consecutive block of words  a ``phrase'', and an aligned pair a ``link''.  A set of links such that each word (in either sentence) occurs in exactly one link is called an alignment of the sentences.  In the example above, an alignment can be \{(the president of the USA, Obama), (spoke, gave a talk), (on New Year's day, on January 1st)\}.  In practice, there can be various degrees of how good a certain link is:  there is a better correspondence between ``Obama'' and ``the president of the USA'', than between ``Obama'' and ``the president'', for example; ``spoke'' and ``gave a talk''  might not be as close semantically as the other two links. But either of them would be better than aligning ``USA'' with  ``Year's day''. Thus, another parameter of the problem is a scoring function assigning a weight to each potential link. The weighted sentence alignment problem is defined then as finding a phrase alignment with the best weight. In the  machine translation application,   where each phrase is linked with its potential translation,  statistical models are used to estimate the weight of each link as its probability and the weight of an alignment is the the product of weights of its links.  

 In a more general statement of the problem, in particular in the natural language inference setting \cite{maccartney2008phrase}, the original sentence (text) can contain much more information than the resulting sentence.  However, it can be reduced to the bijective case by padding the target sentence with null words (half the number of words of the original sentence suffices), and setting the weight of links between any phrase over the null words and any phrase of the original sentence to be 1, and weight of any link with a phrase involving both null and non-null words to be 0.   

 In \cite{phrase-align},  DeNero and Klein show that the weighted sentence alignment problem is NP-hard, with its decision version being NP-complete.  Several approaches are commonly used to deal with NP-hardness in practice: restricting the problem,  heuristics and approximation algorithms.  An early example of such a restriction is a bag-of-words alignment of  IBM models  1 and 2 for statistical machine translation \cite{brown1993mathematics}.  In this setting,  there is no need to determine a partition of the source and target sentences into phrases of the optimal alignment, which significantly reduces computational complexity of a problem.  We will focus on the general alignment of phrases to phrases, as well as the setting of the IBM models 3, 4 and 5, in which phrases in one string are matched to the words in the other: this variant of the problem is already NP-complete (unless the alignment has to respect the order of phrases).  To simplify the problem, we will assume, following \cite{phrase-align},  that the probability (that is, weight) of each link is given as part of the input.
 
 Heuristics have been a popular approach for phrase alignment, used both as a direct application of a heuristic and in the context of modelling a problem in a Integer Linear Programming framework, and then invoking  heuristics-based solvers for ILP.  In particular,  hill climbing has been used in \cite{marcu2002phrase,och2003systematic,birch2006constraining} and simulated annealing in \cite{maccartney2008phrase} to solve the problem of partitioning strings into phrases.   However, although useful in practice, such heuristic algorithms give no guarantee of the closeness to optimality.  
  
In this paper we will focus on the complexity of approximating  an optimal alignment. However, we will consider a somewhat different notion of an approximation.  Usually, an approximation algorithm produces a solution with a value close enough to the value of an optimal solution (for example, an alignment with probability at least half that of the optimal). But such an alignment can be very different from an optimal alignment. This invites a natural question: is it possible to compute a solution, an alignment, which is guaranteed to share a significant fraction of links with an optimal solution? For example, is it possible to compute a translation in which most of the source sentence is translated correctly, even if the incorrectly translated part may bring the overall probability of the alignment down to 0? To investigate this type of approximation, we will use the structure approximation framework of \cite{HMRW07}.
 
 \subsection{Approximating solution structure}
 
Motivated by cognitive psychology applications such as  the Coherence problem,  Hamilton, M\"{u}ller, van Rooij and Wareham \cite{HMRW07} presented  variant of  approximation which they called a \emph{structure approximation}. This framework extends the notion of finding solutions  close in value to the optimal  to close according to a specified metric.  More precisely, the description of a problem includes a  distance function $d(y,z)$ which may depend on the input, and an approximate solution  $y$ is considered good if  $d(y,z)$ is sufficiently small for some optimal solution $z$.  This generalizes the standard notion of approximation as the distance $d(y,z)$ can be defined as a log of the ratio of values of solutions $y$ and $z$.  In a follow-up paper \cite{Todd-journal}, this approach was applied to other problems such as the coherence model of belief fixation in cognitive science. 

\cite{HMRW07} present a number of lower bounds results for  arbitrary distance functions such as  showing  that there are no NP-hard problems with a structure analogue of FPTAS for an arbitrary function. Among the other distance function they consider, the most prominent is the  Hamming distance. This is a very natural metric for comparing how close two solutions encoded as binary strings are. For example, in the Hamming approximation for Max3SAT a solution close to the optimal would be considered a solution which differs from an optimal in  few variable assignments, even if these variable assignments dramatically decrease the number of satisfied clauses.  

Several other papers include results that can be interpreted as lower bounds for structure approximability with respect to Hamming distance. The reconstruction of a partially specified NP witness, considered in the 1999 paper by Gal, Halevi, Lipton and Petrank \cite{GHLP99}, is probably the first result along these lines. There, they show that it is possible to reconstruct a satisfying assignment to a formula from $N^{1/2+\epsilon}$ bits of a satisfying assignment of  a related (though larger) formula. Their proofs  rely on erasure codes, thus $\epsilon$ is a fixed parameter.  They also consider Graph Isomorphism, Shortest Lattice Vector and Clique/Vertex Cover/Independent set.     In 1999, Kumar and Sivakumar  \cite{KS99} showed that for any NP problem there is a verifier with respect to which all solutions are Hamming-far from each other: make the witnesses to be encodings of natural witnesses to the original problem by some error-correcting code, the verifier decodes the witness and then checks it using the original verifier. Then, list-decoding allows one to find a correct codeword for the witness from a string which is within $n/2+n^{4/5+\gamma}$ Hamming distance from it.   Following this, Feige, Langberg and Nissim \cite{FLN00} show that some natural verifiers (e.g., binary strings directly encoding satisfying assignments for variants of SAT, encoding sequences of vertices for Clique/Vertex Cover, etc) are often hard to approximate to within Hamming distance $n/2-n^\epsilon$ for some $\epsilon$ dependent on the underlying error-correcting code.  Guruswami and Rudra \cite{GR08} improve this $\epsilon$ to $2/3+\gamma$,  but on the negative side argue that methods based on error-correcting codes can only give bounds up to $n/2-O(\sqrt{n \log n})$.   

The recent paper of Sheldon and Young \cite{SY13} settles much of the Hamming distance approximation question, providing the lower bounds of $n/2-n^\epsilon$ for any $\epsilon$ for many of the problems considered in \cite{FLN00}, as well as upper bounds of $n/2$ for several natural problems including Weighted Vertex Cover, and  a surprising $n/2+O(\sqrt{n \log n})$ lower bound for the universal NP-complete language. The latter result they extend to existence of such very hard to approximate verifiers for all paddable (in Berman-Hartmanis \cite{BH77} sense) NP languages, improving on \cite{KS99}. Their proof techniques avoid error-correcting codes altogether, instead combining amplification with search-to-decision (Turing)  reductions and downward self-reducibility.  

\subsection{Our results} 

In this paper, we analyse the complexity of approximating solution structure of the weighted sentence alignment problem (WSA), in particular its variant in which phrases in the source sentence are aligned with words in the target sentence (PWSA problem). We show that for PWSA,  even when the weight function is restricted to take $\{0,1\}$ values, computing an alignment which agrees with an optimal on at least $n/2+n^\epsilon$, for any constant $\epsilon >0$,  links is NP-hard, where $n$ is the length of the source sentence.  Moreover, the hardness stems from the problem of the partitioning the source sentence into phrases: we show how to modify the NP-hardness proof in such a way that the optimal alignment can be recovered directly from such partition.  More specifically, we define a compact solution representation for that problem to be a binary string encoding the locations of  phrase boundaries, and show that computing a string which agrees with it on at least $n/2+n^\epsilon$  positions (that is, a string within Hamming distance $n/2-n^\epsilon$)  is already NP-hard.  Note that since expected Hamming distance between any string with $n/2$ 1s and a random string with $n/2$ 1s is $n/2$, there is a randomized algorithm giving an expected Hamming approximation  $n/2$. Therefore, our results are tight. 

 For the more general case where the target string is required to be partitioned into phrases as well (and thus the solution represents partitions for both strings), we obtain a weaker bound requiring a $2n/3+n^\epsilon$ agreement for NP-hardness. 

A different metric of the distance between two solutions encoded in this form is an edit distance:  there, a string resulting from shifting a consecutive group of phrases  by one word is considered to be distance 2 from the original, even if the shift has affected a significant portion of the string.   We show how the Hamming distance approximation results can be extended  to give edit distance approximation for two standard NP-hard problem 3SAT and VertexCover, and how to apply this technique to give lower bounds on edit distance approximation of the WSA and PWSA problems.  To our knowledge, these are the first, if mathematically simple,  such lower bounds on approximating solution structure with respect to edit distance (although \cite{HMRW07} do give a lower bound on edit distance solution structure approximation for the Longest Common Subsequence problem in the parameterized setting).

\section{Preliminaries}

Following DeNero and Klein \cite{phrase-align}, we formally define a \emph{weighted sentence alignment (WSA)} problem as follows. Let $e$ and $f$  be sentences.  The phrases in  $e$ are represented by a set $\{e_{ij}\}$, where $e_{ij}$ is a sequence of words from in-between-word position $i$ to $j$ in $e$; $f$ is represented by $\{f_{kl}\}$ in the same fashion.  A link is an aligned pair of phrases $(e_{ij},f_{kl})$.  An alignment is a set of links such that every word (token), in either sentence,  occurs in exactly one link (here, we treat each occurrence of a word as a separate word).   A weight function  $\phi: \{ (e_{ij},f_{kl}) \} \to \mathbb{R}$ assigns a weight to each link. A total weight of an alignment $a$, denoted $\phi(a)$,  is a product of weights of its links.  Now,  an optimization version of the weighted sentence alignment problem asks, given $(e,f,\phi)$,  to find the alignment with the maximum weight. A decision version of this problem can be stated as finding an alignment $a$ of weight $\phi(a) \geq 1$.   

\begin{theorem}\cite{phrase-align}\label{sat-wsa} 
The decision version of the WSA problem  is NP-complete. 
\end{theorem}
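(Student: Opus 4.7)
The plan is to establish the two halves of NP-completeness separately. Membership in NP is essentially by inspection: an alignment consists of at most $|e|+|f|$ links, each of polynomial size, and thus serves as a polynomial certificate. Given the alignment, one checks in polynomial time that every token of $e$ and $f$ is covered by exactly one link and that the product of the link weights, as listed in $\phi$, is at least $1$ (each individual weight is given in the input with polynomial bit length, so the product is computable in polynomial time).

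For NP-hardness I would reduce from 3SAT. The overall idea is to use the choice of phrase boundaries in the source sentence $e$ to encode a truth assignment, and to use the weight function $\phi$, restricted to values in $\{0,1\}$, to reject assignments that falsify some clause. Since $\{0,1\}$-valued weights force the product to equal $1$ iff every individual link has weight $1$, a weight-$\geq 1$ alignment is exactly an alignment whose every link is ``logically admissible''.

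The main gadget I would construct, given a formula with variables $x_1,\dots,x_n$ and clauses $C_1,\dots,C_m$, is a variable block of the form $t_i^{+}\, t_i^{-}$ in $e$ for each $x_i$, separated by marker tokens that no legal phrase can cross (enforced by giving marker-crossing links weight $0$). The two partition choices of such a block---one phrase of length two, or two singleton phrases---encode the two truth values of $x_i$. For each clause $C_j$ I would introduce a clause token in $f$ together with supporting padding, and specify $\phi$ so that the clause token can be linked with weight $1$ only to a variable-block phrase whose partition represents a literal satisfying $C_j$. Null-word padding on the $f$ side, as in the bijective reduction discussed in the introduction, absorbs any variable-block phrases not used to witness a clause via catch-all links of weight $1$.

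The main obstacle is the consistency problem: if variable $x_i$ appears in several clauses, every clause gadget that reads $x_i$ must see the same truth value, but a single $t_i^{+}t_i^{-}$ block has a fixed partition. I would address this by duplicating the variable block once per literal occurrence and inserting ``equality'' subgadgets whose links carry weight $1$ only when adjacent copies of the same variable are partitioned in the same way, realized as a small chain of auxiliary phrases on the $f$ side. A secondary technical point is that tokens not directly used in a variable- or clause-gadget must still be pairable with weight $1$, so that the only possible obstruction to a total weight of $1$ is a falsified clause; handled by further padding. Showing both directions of the equivalence then reduces to a routine case analysis: satisfying assignments yield alignments in which every gadget link is admissible (weight $1$), and conversely a weight-$1$ alignment forces a consistent choice at each variable block that satisfies every clause.
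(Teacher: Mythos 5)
The NP-membership half of your argument is fine and matches the paper's. The hardness half, however, has a genuine gap exactly at the point you yourself flag as ``the main obstacle'': consistency between the multiple copies of a variable gadget. You propose to fix it with ``equality subgadgets whose links carry weight $1$ only when adjacent copies of the same variable are partitioned in the same way,'' but you never construct such a gadget, and it is not clear one exists in this model. The weight of a link $(e_{ij},f_{kl})$ depends only on the two phrases it joins; it cannot see how some other block of $e$ has been partitioned. The only mechanism for coupling distant parts of the alignment is the global exact-cover condition (every token in exactly one link), so an equality gadget would have to work by a counting/exhaustion argument over which $f$-words remain available --- and your catch-all null-padding, which ``absorbs any variable-block phrases not used to witness a clause via catch-all links of weight $1$,'' actively destroys any such counting argument, since it lets inconsistent partitions be absorbed for free. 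As written, a weight-$1$ alignment could partition one copy of $x_i$ as ``true'' to satisfy $C_j$ and another copy as ``false'' to satisfy $C_{j'}$, with all leftovers swallowed by padding, so the backward direction of the reduction fails.

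The paper's reduction (DeNero--Klein) sidesteps this problem entirely rather than solving it with equality gadgets: all occurrences of a variable $x_i$ are placed in a \emph{single contiguous block} of $e$, positive occurrences first and then negative ones, and a single slack word $s_i$ in $f$ is given weight-$1$ links only to contiguous sub-blocks that contain \emph{all} the positive or \emph{all} the negative occurrences. Since $s_i$ must be used exactly once, whatever it absorbs leaves only occurrences of one polarity free to be matched to clause words, which is precisely the consistency you need --- no duplication and no equality gadget required. If you want to salvage your variant, you would need to either adopt this contiguous-block-plus-slack-word trick or give an explicit construction of the equality gadget together with a proof that bijectivity forces it to behave as claimed; the latter is the entire difficulty of the theorem, not a ``routine case analysis.''
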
 
\begin{proof} 
 DeNero and Klein in \cite{phrase-align} show NP-hardness of WSA by the following  reduction from 3SAT. Let $F$ be a formula with $n$ variables and $m$ clauses. The construction will produce an instance  $I$ of WSA consisting of sentences $e$ and $f$, and a function $\phi$ such that there is an alignment of weight (at least) 1 in $I$ if and only if  $F$ is satisfiable.  For that, let sentence $e$ consist of blocks of words as follows, with one word for each occurrence of a literal:   $x_i^1\dots x_1^{p_i} \bar{x}_i^1 \dots \bar{x}_i^{q_i}$, where $p_i$ and $q_i$ are the number of positive and negative occurrences of $x_i$ in $F$, respectively. Thus, the length of $e$ will be $ \leq 3m$, with equality if every clause in $F$ contains exactly 3 literals.  Now, the sentence $f$ will contain two types of words. The first $m$ words, $c_1 \dots c_m$, will correspond to the clauses of $F$. They will be followed by ``slack words'' $s_1 \dots s_n$, one for each variable in $F$.  Finally, the function $\phi$ will only have values 0 and 1, and it will have the value 1 in two cases. First,  if the link is of the form $(c_i, l_k)$, where literal $l_k$ occurs positively in clause $c_i$ (for all occurrences of $l_k$).  This will be used to align each clause with a literal that makes it true. Second, each slack variable $s_i$ corresponding to a variable $i$ will be aligned with all possible substrings of $x_i^1\dots x_1^{p_i} \bar{x}_i^1 \dots \bar{x}_i^{q_i}$ in which either all positive or all negative copies of the variable (or both)  are present. For example, if there is one positive occurrence of $x_i$ and two negative occurrences of $x_i$, then the links with $\phi(e_{i,j}, f_{k,l})=1$ have $f_{k,l}=s_i$ and $e_{i,j}$ either $x_i \bar{x}_i \bar{x}_i$, or $\bar{x}_i\bar{x_i}$, or $x_i\bar{x}_i$, or $x_i$. The first one covers both positive and negative, the second covers all negative, and the last two all positive occurrences of the literal.  These slack variables are needed to ensure that either only positive or only negative literals are left unmatched to be aligned with clause words.  
 
To see that this reduction works, note that a satisfying assignment becomes an alignment in which every clause word is matched with one literal that makes it true (starting from the front of the block for positive and end of the block for negative), and slack variables cover the literals that remain unmatched to clauses.  For the other direction, note that there is exactly one link for each slack variable:  if it is matched with a block that contains all positive occurrences of the corresponding variable in $F$,  the corresponding variable can be set to false, 
otherwise it can be set to true (if it is matched with the block containing all occurrences, then either assignment works).  
 
Assuming that $F$ has exactly 3 variables per clause, $|e|=3m$, $|f|=m+n$, and $|\phi|  \leq (3m)^2(m+n)^2$, therefore the resulting instance is polynomial size, and the reduction runs in polynomial time. 

Therefore, WSA is NP-hard.  As an  alignment can be checked for validity (by asserting that each word appears exactly once) and the  weight of the alignment can be computed in polynomial time, the decision version of WSA is NP-complete.  
  \end{proof}

Alternatively, NP-hardness of WSA can be shown by a reduction from the VertexCover problem. There, we are given an undirected  graph  $G=(V,E)$ with $n$ vertices and $m$ edges, and asked whether there exists a subset of $k$ vertices called a cover such that every edge has as its endpoint  at least one vertex in the cover.  In an optimization version, a minimal-size such cover is sought.  To show $VertexCover \leq_p WSA$, construct the instance as follows. The words of $e$ will be blocks of copies of each vertex $v_i$, where the length of each such block is the degree of $v_i$, denoted $deg(v_i)$, plus 1, so $|e|=2m+n$.  The words of $f$ will be of three types. The first $m$ words $c_1 \dots c_m$ will correspond to edges of $G$;  the next $n$ words are the ``slack variables'' $s_1 \dots s_n$ covering leftover copies of vertices, with one extra copy always covered by $s_i$,  and the final  $n-k$ words $t_1 \dots t_{n-k}$  in $f$ will ensure that the size of the cover is at most $k$. Thus, $|f|=m+n+(n-k)=m+2n-k$.  With this intuition, define $\phi$ so that  $\phi(v_{i,j},c_l)=1$ if edge $c_l$  has $v_i$ as its endpoint  (for each copy $v_{i,j}$ of $v_i$),  then $\phi(v_{i,j} \dots v_{i,\deg(v_i)+1}, s_i)=1$ for each $i$ and all  $j$, $1 \leq j \leq deg(v_i)$. Finally, each $t_l$  can cover the full block for every vertex (except for the last copy), so $\phi(v_{i,1} \dots v_{i,deg(v_i)}, t_l)=1$ for every $t_l$ and every $v_i$. 

If there is a vertex cover of size $k$ in $G$, then an alignment in the constructed instance will  link all vertices other than the $k$ vertices in the cover with $t$-variables, will link each edge with a copy of a vertex in the cover (in order starting from $v_{i,1}$), and variables $s_i$ will be linked with a block of remaining copies of the corresponding vertices (consisting of at least one special copy, more if some edges have both endpoints in the cover).  For the other direction, variables $t_l$ denote vertices not in the cover, so the cover consists of the remaining vertices.  If there is a cover of size smaller than $k$, then some $s_i$ variables  align with the whole block corresponding to such extra $v_i$, which is allowed by our definition of $\phi$. 

\subsection{Defining a natural witness for WSA}

Before we can talk about structure approximation of WSA, we need to define what is meant by the witness (or feasible solution) to the  WSA problem. Here, we will consider an alignment of any weight to be a feasible solution; the question remains how to represent an alignment.  In DeNero and Klein \cite{phrase-align}, an alignment is visualized as a matrix with words of $e$ as columns, words of $f$ as rows and a cell $(i,k)$ highlighted  (say, set to 1) if the  block with the $i^{th}$ word of $e$ is linked to the block with the $k^{th}$ word of $f$. Each link thus becomes a rectangular all-ones block in the matrix.  This representation is not the most efficient in terms of space, although it is convenient for visualization of the solution.  In particular, for the instances coming from the $3SAT  \leq_p WSA$ reduction above, any feasible solution will only have $3m$ cells out of $3m \times (m+n)=N$ possible cells highlighted.   In this case, it is trivial to approximate the witness to an instance of WSA produced from this 3SAT reduction: an all-zero matrix already gives a $N-(m+n)$ Hamming distance approximation.

Now, notice that the reduction above proves NP-hardness for a special case of the problem: that where all phrases in $f$ are single words. For this restricted problem, a Hamming distance (and therefore an edit distance) approximation by an all-zero matrix is  $|e|*|f|-|f|$ close to any solution. One may object that an all-zero matrix is not a valid alignment: here, 
we can construct an alignment by matching first $|f|-1$ words of $e$ with words of $f$, and  all the remaining words of $e$ as one phrase to the last word of $f$. This gives us a  $|e|*|f| - 2|f|$ Hamming approximation for the alignment represented as $|e| \times |f|$ matrix. 

As we are looking for natural (and compact) witnesses, we will use a different representation of the solution. For that, notice that finding a solution to WSA involves solving two problems: first, we need to determine how to break each sentence into phrases, and second, to determine an optimal alignment using only links involving these phrases.  So a feasible solution can consist of two components: the first component with two binary strings of length $|e|-1$ and $|f|-1$, with $1$ in between-phrase positions and $0$ otherwise.  The second component can list the order of phrases in $f$ mapping to phrases in $e$; if there are $n$ phrases in each, then the length of that component is $n \log n$.  

What part of computing this witness, and thus of solving the WSA problem, is the hardest? Consider again the set of instances of WSA resulting from the reduction. We would like to define a special case of WSA for which we could use as small a witness as  possible, and still have the NP-hardness reduction above work.  As noted above, one special property of this reduction is that it always produces a partition of $f$ where every phrase is exactly one word. The information encoded in the second part of the witness described in the previous paragraph, the string of $|f|-1$ bits denoting the phrase boundaries in $f$, is therefore redundant.  

Secondly, $\phi$ involved in the reduction has a special property that it can only take values  $0$ and $1$. In that case,  after solving the first part of the problem (finding splitting points between phrases in $e$ and $f$), the second part can be computed in polynomial time by the standard network flow algorithm for bipartite perfect matching, with  phrases of $e$ and $f$ forming the vertices of the bipartite graph, and an edge connecting two vertices $v$ and $u$ iff $\phi(v,u)=1$.  Thus, in this case it is enough to compute a  witness which contains only the binary strings denoting splitting points between phrases, as described above.  

Now, combining the two restrictions we will define a problem PWSA, which is a special case of WSA  satisfying the properties above. 

\begin{definition}[PWSA] 
The PWSA (for ``partition'' WSA)  problem is defined as follows. Given as input $(e,f,\phi)$ where $\phi: \{(e_{ij}, f_{kl})\} \to \{0,1\}$, find a partition of $e$ into phrases such that there is an alignment of weight 1 of phrases in this partition with words of $f$.  

The  natural witness  $w$ for PWSA will be a binary string $w_1 \dots w_{|e|-1}$ such that  if $e_{ij}$ is a phrase in the optimal alignment, then $w_i=w_j=1$, or $w_j=1$ and $i=0$, or $w_i=1$ and $j=|e|$; and $\forall k, i <k<j, w_k=0$.   Note that $w$ has to have  $|f|-1$ 1s for any valid alignment.  
\end{definition} 

Here, the NP-hardness follows by the same $3SAT \leq_p WSA$ reduction as in theorem \ref{sat-wsa}, where the satisfying assignment is recovered from $w$ by running the network flow algorithm and determining, as before, the values of the variables of $F$ from the links with slack variables $s_i$.  Moreover,  for variables with more than two positive and two negative occurrences the value can be determined directly from $w$.  Suppose a slack variable covers all positive occurrences of a variable $v$, and leaves out some negative occurrences. Then, there will be no splitting points within the block denoting the positive literals, but there will be as many splitting points for the negative literals as there are clauses which use them. From that, already, it can be inferred that the negative occurrences were used to satisfy the clauses, thus the variable needs to be set to false. 
So if a substring  $w_{ij}$ of $w$ corresponding to a block of encoding a literal $v$ (without the endpoints)  is of the form $1111....0000$, then we can immediately infer that  $v=true$, otherwise if it is of the form $000....1111$, $v=false$.  It would not work if there is exactly one  positive or negative occurrence of a variable;  but this can be resolved by modifying   the reduction  so that there is always an extra ``$v_i \bar{v_i}$''  (or a single dummy variable) in the middle of each block, and $\phi(x\dots x)=\phi(\bar{x} \dots \bar{x})=0$. Then, the partition of $e$ uniquely specifies the optimal alignment.

\section{Edit distance inapproximability}

Consider $d_E(y,z)$ to be the \emph{edit distance} between strings $y$ and $z$, that is, the number of insert, replace and delete a symbol operations needed to convert $y$ into $z$.   This function, even though in some respect related to Hamming distance, nevertheless has a very different behaviour. For example, a string $01010101$ and a string $10101010$ have the maximal Hamming distance of $n=8$, however their edit distance is just $2$, corresponding to deleting a $0$ in front and inserting it in the back of the string. For Hamming distance, a random string is expected to be within $n/2$ from any string, but  it is not clear what expected edit distance between two random strings is.  If two strings are far in the edit distance though, then in particular they are far in the Hamming distance. So lower bounds on edit distance approximability imply lower bounds for the Hamming distance, but the reverse is not immediate. 

However, in case when one of the strings is a string of all 0s or all 1s then the two notions coincide, as long as the length of the approximating string is the same. Indeed, even edit distance with transpositions to a string of all 1s from any given string is equivalent to  Hamming distance. 

\begin{lemma} 
For any string $x$ of length $n$, its Hamming distance to a string of $n$ 1s is equal to the edit distance. 
\end{lemma}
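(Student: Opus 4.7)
The plan is a two-sided inequality, obtained by elementary accounting on the three operation types. Let $k$ denote the number of $0$s in $x$; by definition this is exactly $d_H(x,1^n)$. The upper bound $d_E(x,1^n) \leq k$ is immediate: I would simply replace each $0$ in $x$ by a $1$, using $k$ replace operations and no insertions or deletions, and observe that the result is $1^n$.

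For the matching lower bound, I would take any edit sequence converting $x$ into $1^n$, decompose it into $i$ insertions, $d$ deletions, and $r$ replacements, and count. Since $|x| = |1^n| = n$, a length-balance argument forces $i = d$. Each deletion removes at most one $0$ from $x$, so the number of original $0$s that survive in the retained substring of $x$ is at least $k - d$; each such surviving $0$ must be turned into a $1$ in the output, and so must be targeted by a distinct replace operation. Hence $r \geq k - d$, and the total operation count satisfies $i + d + r \geq 2d + (k - d) = d + k \geq k$, yielding $d_E(x,1^n) \geq k$. Combining both bounds gives $d_E(x,1^n) = k = d_H(x,1^n)$.

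I do not anticipate a serious obstacle here; the only subtlety is ruling out that an insertion/deletion pair could ``cheat'' and eliminate a $0$ more cheaply than a single replace, but the length-balance constraint $i = d$ makes any such trade strictly worse, since one saved replace costs two extra indels. The parenthetical remark about edit distance with transpositions follows by the same accounting: a transposition merely swaps two adjacent symbols and therefore preserves the multiset of characters in the current string, so it cannot decrease the number of remaining $0$s and contributes nothing to converting $x$ into $1^n$ beyond what insertions, deletions, and replacements already achieve.
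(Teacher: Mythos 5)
Your proof is correct and follows essentially the same route as the paper's one-sentence argument: the paper likewise observes that each remaining $0$ must be handled by a replacement or an insertion/deletion pair, and that length preservation forces insertions and deletions to come in pairs, so the edit cost is at least the number of $0$s. You simply spell out the accounting (including the $i=d$ balance and the $r \geq k-d$ bound) in more detail than the paper does.
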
 

The proof follows directly from the fact that only replacements and insertions introduce 0s, and each insertion needs to have a corresponding deletion. Now, Sheldon-Young \cite{SY13} proof that a natural witness for SAT cannot be Hamming-distance-approximated to within $n/2-n^\epsilon$,  for any constant $\epsilon>0$, proceeds as follows.  First, note that it is enough to  have an algorithm determining the value of one variable; the formula is then simplified and the process is repeated until the whole assignment is revealed. Now, the proof proceeds by amplifying an arbitrary variable $z_i$   $n^{1/\epsilon}$ times, that is introducing $n^{1/\epsilon}$ new variables and adding clauses stating that they are equivalent  to $z_i$. Now, if there is a polynomial-time algorithm that is guaranteed to return a witness within $n/2-n^\epsilon$ Hamming distance of a satisfying assignment, then such a string will be correct on majority of copies of $z_i$. Taking the majority thus gives the correct value of this variable, and repeating the process $n$ times, substituting computed values on each iteration, results in a  satisfying assignment.  The resulting algorithm for SAT  will run in time $n^{O(1/\epsilon)}$ times the running time of the assumed polynomial-time approximation algorithm, which is polynomial when $\epsilon$ is constant. 

\begin{theorem} 
If there is a polynomial-time algorithm that, for some constant $\epsilon >0$,  can approximate the natural witness to SAT to within edit distance $n/2-n^\epsilon$, then P=NP.
\end{theorem}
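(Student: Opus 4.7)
The plan is to mimic the Sheldon--Young amplification argument sketched in the preceding paragraph, but to replace the coordinate-wise ``majority over copies of $z_i$'' decoding step by a global statistic that is robust under all three edit operations. The naive transfer of Sheldon--Young fails because insertions and deletions can shift the amplified block of copies of $z_i$ sideways within $\hat s$, so the positions where we would vote may carry no information at all. The key point, which is the weight-level content of the preceding lemma, is that a single insert, delete, or substitute changes the Hamming weight $w(x) = |\{j : x_j = 1\}|$ of a binary string by at most one; consequently $|w(\hat s) - w(w)| \le d_E(\hat s, w) \le D$, so the number of $1$s in $\hat s$ is a faithful proxy for the number of $1$s in the true witness $w$.

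Concretely, to recover the value $a_i$ of a variable $z_i$ of a SAT instance $F$ on $n$ variables, I would construct $F'$ by introducing $L = n^{1/\epsilon}$ fresh variables $y_1,\dots,y_L$ together with the equivalence clauses $y_j \leftrightarrow z_i$. Any satisfying witness of $F'$ has the form $w = a \cdot a_i^L$ of length $N = n + L$, so $w(w) \ge L$ when $a_i = 1$ and $w(w) \le n$ when $a_i = 0$. Applying the hypothesized edit-distance approximator to $F'$ returns some $\hat s$ with $d_E(\hat s, w) \le D = N/2 - N^\epsilon$, and the weight-robustness observation above gives $w(\hat s) \ge L - D$ in the first case and $w(\hat s) \le n + D$ in the second. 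Because $(n+L)^\epsilon \ge L^\epsilon = n$ for this choice of $L$, a routine calculation shows $L - D > n + D$, so the two ranges are disjoint and thresholding $w(\hat s)$ at $(n+L)/2$ reads off $a_i$ exactly. Substituting the recovered value back into $F$ and iterating over all $n$ variables produces a candidate assignment, which a final check against $F$ either confirms (if $F$ is satisfiable) or rejects (if $F$ is not); this decides SAT in polynomial time, giving $\mathrm{P} = \mathrm{NP}$.

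The main obstacle is conceptual rather than computational: one must recognise that the per-coordinate vote used by Sheldon--Young is genuinely incompatible with the shift freedom of edit alignments, and then identify the global Hamming weight as the right robust replacement. Once the Hamming-weight statistic is in hand, the amplification arithmetic is essentially identical to that of the Hamming case --- the required separation $L - D > n + D$ again reduces to $(n+L)^\epsilon > n$ --- so the same $n^{1/\epsilon}$ amplification depth that Sheldon--Young use continues to suffice, and no new ingredient beyond the lemma is needed.
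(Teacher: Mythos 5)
Your proof is correct, and the amplification skeleton (introduce $L=n^{1/\epsilon}$ copies of $z_i$, recover $z_i$, substitute, recurse, verify at the end) is the same as the paper's; where you genuinely diverge is in the decoding step. The paper keeps the Sheldon--Young per-coordinate majority vote: it looks at the first $L$ positions of the returned string and argues that each replacement, and each insert/delete pair, can corrupt at most one position of that all-ones (or all-zeros) block, so the Hamming damage restricted to the block is bounded by the edit distance and the majority is still correct. You instead threshold the global Hamming weight $w(\hat s)$, using the observation that every single edit operation changes the weight of a binary string by at most one, so $|w(\hat s)-w(w)|\le D$ and the two cases $w(w)\ge L$ versus $w(w)\le n$ remain separated because $N^\epsilon>n$. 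Your claim that the positional vote is ``genuinely incompatible'' with edit alignments is an overstatement --- the paper's block-damage accounting does make it work --- but your statistic is arguably cleaner: it needs no argument about where the block lands after shifts, and it is insensitive to the returned string having a different length, a case the paper's lemma explicitly excludes. The arithmetic ($L-D>n+D$ reducing to $N^\epsilon>n$) is verified correctly, so no gap remains.
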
 
\begin{proof}

Note that a natural witness for  this problem consists of either $n^{1/\epsilon}$ 0s or $n^{1/\epsilon}$ ones, together with  $n-1$ symbols of arbitrary values for the rest of the variables; moreover, we can assume that all values of the copies of $z_i$ are together, for example forming the first $n^{1/\epsilon}$ positions of the string.   Now, suppose there is an algorithm that approximates the satisfying assignment above,  with $n^{1/\epsilon}$ copies of $z_i$, to within edit distance $N/2-N^\epsilon$ rather than Hamming distance, where $N=n+n^{1/\epsilon}$.  Let $y'$ be a string returned by the approximation algorithm and $y$ the corresponding optimal solution. Consider only the first $n^{1/\epsilon}$ positions in $y'$, ones corresponding to the copies of $z_i$.  Without loss of generality, assume that $z_i=1$ in $y$.  These positions can be changed to 0 (to obtain $y'$) by either a replacement or an insertion/deletion pair moving values of the remaining $n-1$ variables into the first $n^{1/\epsilon}$ positions. But as discussed above, in this case the number of insert/delete pairs is at least as large as the number of replacements. Therefore, the same argument as for the Hamming distance applies, and bounding the edit distance between $y$ and $y'$  by  $N-N^\epsilon$ means that majority of the copies of $z_i$ in $y'$ have a correct value. 
 Note also that this argument works even if transposition operations are allowed. 
\end{proof} 

A similar argument can be used to show  $n/2-n^\epsilon$ lower bound for the edit distance approximation of VertexCover; however, as it will involve a string of 1s and a string of 0s, the only edit distance operations allowed will be insertions, deletions and replacements. Recall that in the MinVertexCover the goal is to determine a minimal set of vertices such that every edge has at least one endpoint in the cover; the decision version  VertexCover asks to determine if there is a cover of size at most $k$.   A natural witness to VertexCover is  a binary string of length $n=|V|$, where a bit corresponding to a vertex is 1 iff that vertex is in the cover.  In the \cite{SY13} proof of Hamming distance inapproximability of this problem,  in an input graph a copy of an arbitrary vertex $v$ is made and an even-length path on $\geq 2n^{1/\epsilon}$ vertices  is added between $v$ and its copy $v'$. 
Now, as a (minimal) vertex cover of an even-length path consists of either all even or all odd vertices, we say that the original $v$ is in the $k+n^{1/\epsilon}$ cover if all even vertices are in that cover, otherwise $v$ is not in the cover.  Then the argument proceeds by showing that the majority of the vertices on the path will be correctly placed by the same calculation as for SAT above.

\begin{theorem}\label{vc-edit}
Unless P=NP, no polynomial-time algorithm can approximate the natural witness to VertexCover within edit distance  $n/2-n^\epsilon$, for any constant $\epsilon >0$. 
\end{theorem}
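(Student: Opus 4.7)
The plan is to mirror the edit-distance argument of the preceding theorem: amplify the decision bit into a contiguous block of $L$ identical witness positions, then invoke the earlier lemma that edit distance against a run of identical characters equals Hamming distance. The Sheldon--Young Hamming reduction for VertexCover uses an even-length path between $v$ and its copy $v'$, whose two minimum covers are opposite-parity alternations; but those two patterns have edit distance only $2$ (delete the leading bit and append it elsewhere), so a direct port of that proof fails for edit distance. I therefore need an amplifier whose natural witness is a \emph{contiguous constant block} rather than an alternating one.

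My first step is to amplify on the 3SAT side and then reduce to VertexCover. Given a 3SAT instance with a distinguished variable $z_i$, introduce $L$ fresh variables $z_i^{(1)},\ldots,z_i^{(L)}$ and clauses $z_i^{(j)}\Leftrightarrow z_i^{(j+1)}$ exactly as in the previous theorem, then apply the standard 3SAT-to-VertexCover Karp reduction. In the resulting graph, the $L$ variable-gadget edges $(z_i^{(j),+},z_i^{(j),-})$ are tied together through the clause triangles built from the equivalence clauses, so in every minimum cover either every $z_i^{(j),+}$ lies in the cover (encoding $z_i=\text{false}$) or none does (encoding $z_i=\text{true}$). After relabeling the vertices, these $L$ positions form a contiguous prefix of the natural witness, all $0$s or all $1$s depending on $z_i$.

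From here the rest of the argument is essentially the same as the SAT edit-distance proof. Let $N$ be the VC witness length and choose $L$ polynomial in the input, large enough that $N-L$ is dominated by $L^\epsilon$ for the fixed constant $\epsilon$. If $y'$ is the output of a polynomial-time approximator with $d_E(y,y')\le N/2-N^\epsilon$, then applying the lemma to the constant block of $y$ shows that each flipped block-bit in $y'$ is charged at least one edit operation (a replacement, or an insertion paired with a deletion elsewhere); hence the Hamming distance between $y$ and $y'$ on block positions is at most $N/2-N^\epsilon$, which is strictly less than $L/2$ by choice of $L$. A majority vote on the block recovers the value of $z_i$; iterating with self-reduction (substituting the recovered value back into the amplified instance) reveals the whole assignment and puts $\mathbf{NP}$ in $\mathbf{P}$.

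The hard part is really the first step. VertexCover does not come with a native equivalence amplifier, and the SY path is unusable because its two minimum covers are edit-close. Routing the amplification through SAT and then through the Karp reduction lets me borrow SAT's equivalence clauses for free, but it still requires verifying that (i) the minimum cover size is preserved, so that the correspondence between the sign of $z_i$ and membership of $z_i^{(j),+}$ holds across \emph{all} minimum covers and not merely some of them, and (ii) the $L$ amplified bits really form a contiguous, unambiguously constant block in a well-defined natural witness. A purely VC-side amplifier would be conceptually cleaner but seems to demand a carefully balanced pendant-plus-compensator gadget to avoid biasing the minimum cover toward one answer.
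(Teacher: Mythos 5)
Your proposal has a genuine gap in the amplification step. Routing the amplifier through the 3SAT$\to$VertexCover Karp reduction makes the amplified block too small a fraction of the witness for the majority vote to work. Each of the $L$ copies $z_i^{(j)}$ costs not only the two variable-gadget vertices $z_i^{(j),+},z_i^{(j),-}$ but also the gadget vertices for the two clauses encoding $z_i^{(j)}\Leftrightarrow z_i^{(j+1)}$, so the total witness length satisfies $N\geq cL$ for a constant $c$ of roughly $6$ to $8$, while your designated constant block has size only $L$. The majority argument needs $N/2+N^\epsilon-(N-L)>L/2$, i.e.\ $L>N-2N^\epsilon$, so the block must constitute essentially the entire witness; your own stated requirement that ``$N-L$ is dominated by $L^\epsilon$'' is therefore unsatisfiable in your construction, and an adversarial approximator with error budget $N/2-N^\epsilon\geq 2L$ can corrupt every bit of the block. (One could try to rescue this by observing that the clause-triangle vertices of the equivalence gadgets are also forced by the value of $z_i$ and counting them too, but then they split into an in-cover block and an out-of-cover block and you need the two-sided insert/delete damage accounting anyway --- at which point you have reconstructed the paper's argument.)

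Your premise for abandoning the Sheldon--Young path --- that its two minimum covers are alternating strings of edit distance $2$ --- is also where you diverge from the paper unnecessarily. The paper keeps the SY path but \emph{reorders} the witness coordinates: all even-position path vertices $u_1\dots u_{n^{1/\epsilon}}$ come first, then the original vertices, then the odd-position path vertices $w_1\dots w_{n^{1/\epsilon}}$. Under this labelling the witness is $0^L\sigma 1^L$ or $1^L\sigma 0^L$, not an alternation, and a single insert/delete pair can introduce at most one wrong symbol into the $u$-block and one into the $w$-block --- no worse than two replacements --- so the Hamming bound on those $2L$ positions survives and majority recovers whether $v$ is in the cover. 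The edit-closeness you worried about is an artifact of the interleaved encoding, not of the path gadget itself. Your closing machinery (majority vote plus search-to-decision self-reduction) matches the paper, but the construction feeding it does not work as stated.
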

\begin{proof} 
Consider the \cite{SY13} construction described above, but with  a different naming convention for the variables in the witness.  Let variables $v_1 \dots v_n$ be the original variables, $v'$ a copy of a selected variable e.g. of $v_1$,  $u_1 \dots u_{n^{1/\epsilon}}$ be even variables on the path from $v$ to $v'$ and $w_1 \dots w_{n^{1/\epsilon}}$ be the odd variables on that pass. Now, in the witness the first $n^{1/\epsilon}$ positions will correspond to the $u_i$ variables, followed by $v_i$s, in turn followed by the $w_i$s. 

Now, the same kind of argument as before applies. The witness, a characteristic string of a vertex cover of size $K=k+n^{1/\epsilon}$,  will be encoded by  either a string of $n^{1/\epsilon}$ 0s followed by some string of length $n+1$ followed by $n^{1/\epsilon}$ 1s, or a similar string with 0s at the beginning and 1s at the end. Now,  similarly to the SAT  construction, we would like to argue that a sequence of $N/2-N^{\epsilon}$ of arbitrary edit operations (insertions, deletions, replacements) would not result in any string that differs from the original on the $u$-part and $w$-part in more than  $N/2-N^{\epsilon}$ positions. 

Consider a pair of insert/delete operations applied to the above string  encoding a $K$-cover. Suppose, without loss of generality, that the correct string starts with 1s and  ends with 0s.  Consider deleting a value from the $u$ part of the string and inserting it into the $w$ part. Now, the middle part of the string, corresponding to the $v$ variables, could become maximally far from the encoding of the $K$- vertex cover at that point (i.e., if it was of the form 01010101), however to determine whether $v$ is in the cover, only variables $u_i$'s and $w_j$'s are relevant.  A pair of insert-delete operations then introduces at most one 0 into the $u$  part (by shifting the $v$ part into it), and at most one 1 into the $w$ part by insertion. Therefore, the ``damage done'' to these parts of the string is no more than from doing two replacements, and the argument still applies to an already corrupted string. 

Therefore,  if there exists a structure approximation  algorithm for vertex cover that can consistently return a string within edit distance $n/2-n^\epsilon$ from an optimal cover, then this algorithm can be used to determine exactly whether any given variable is in the intended cover. By Turing/search-to-decision reduction, from there the actual cover can be computed. In this reduction, if a vertex was determined to be in the cover, then recurse on a graph without this vertex, and otherwise recurse on a graph without this vertex and all of its neighbours.  
\end{proof}

So far, we have discussed the complexity of approximating an NP witness, however in majority of practical problems it is approximating an optimal solution which is of interest.  But since lower bounds on decision problems imply lower bounds on optimization problems, the results above give inapproximability of the optimization version of this problem, in particular MaxSAT and MinVertexCover.

\section{Hamming distance and edit distance  inapproximability of PWSA and WSA} 

In this section we will show that  PWSA cannot be Hamming or edit distance structure approximated to within $n/2-n^\epsilon$, with respect to the witness defined above. From this, the structure inapproximability of WSA can be derived, albeit with weaker parameters. Note that a random string with $n/2$ 1s  has expected Hamming distance $n/2$ from any given string with $n/2$ 1s; the larger disparity between the number of 0s and 1s gives a better expected Hamming distance. Thus, there is a randomized algorithm approximating PWSA to within Hamming distance $n/2$, but the results below show that doing better than that by a small inverse polynomial amount is NP-hard. 

\begin{theorem}[Hamming inapproximability of PWSA] \label{pwsa-hamming}
Let $(e,f,\phi)$ be a valid input to PWSA. If there is a polynomial-time algorithm $A(e,f,\phi)$ computing a string $w$ which is within Hamming distance $n/2-n^\epsilon$ of a witness for any constant $\epsilon >0$, then P=NP. 
\end{theorem}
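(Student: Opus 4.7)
The plan is to mirror the Sheldon--Young amplification used for SAT and VertexCover and specialize it to PWSA, so that a polynomial-time Hamming approximation of the PWSA witness within $|w|/2 - |w|^\epsilon$ would give a polynomial-time SAT algorithm. Starting from a 3SAT formula $F$ and a target variable $z_i$, the idea is to amplify $z_i$ inside the source sentence $e$ of the PWSA instance so that the block of $e$ corresponding to $z_i$ dominates the witness length; then the value of $z_i$ can be recovered by nearest-neighbor decoding within this single block, and iteration over all variables gives a satisfying assignment.

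Concretely, I would invoke the reduction of Theorem~\ref{sat-wsa} together with the uniqueness modification discussed after the definition of PWSA (the extra $v_i \bar{v}_i$ in the middle of each variable block and $\phi(x \dots x) = \phi(\bar{x} \dots \bar{x}) = 0$), applied not to $F$ itself but to a padded formula $F^+$. The padding adds $K$ tautological clauses of the form $(z_i \vee y_j \vee \neg y_j)$ and $K$ tautological clauses of the form $(\neg z_i \vee y_j' \vee \neg y_j')$ with fresh auxiliary variables; these do not alter satisfiability but give $z_i$ an extra $K$ positive and $K$ negative occurrences, so that the block of $z_i$ in $e$ has length $\ell = p + q + 2K + O(1)$. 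By the uniqueness property of the modified reduction, any optimal alignment restricted to this block has partition pattern $P_{\mathrm{true}} = 1^a 0^b$ or $P_{\mathrm{false}} = 0^c 1^d$ according to the value of $z_i$ in the extracted satisfying assignment, and these two patterns disagree on $\ell - 1$ positions.

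Given the postulated approximator $A$, the decoder outputs the value whose canonical pattern is Hamming-closer to the restriction of $A(e,f,\phi)$ to $z_i$'s block. Correctness follows once the number of errors inside the block is smaller than $(\ell - 1)/2$. Since $A$'s total error budget is $L/2 - L^\epsilon$ where $L = |w|$, a sufficient condition is $L/2 - L^\epsilon < (\ell - 1)/2$, equivalently $L - \ell + 1 < 2 L^\epsilon$. Writing $R = L - \ell$ for the combined length of all other variable blocks and auxiliary positions, $R$ is polynomial in $|F|$; choosing $K$ so that $\ell \geq (2R)^{1/\epsilon}$ makes $L^\epsilon \approx \ell^\epsilon > R$, so the condition holds, and the whole construction is polynomial-time for every fixed $\epsilon > 0$. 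Having decoded $z_i$, I substitute its value in $F$, simplify, and repeat on the remaining formula; after $n$ rounds a full satisfying assignment is produced, yielding $\mathrm{P} = \mathrm{NP}$.

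The main obstacle I expect is securing genuine \emph{uniqueness} of the block pattern under the padding: naively, the word for a tautological clause such as $(z_i \vee y_j \vee \neg y_j)$ can be matched either against an occurrence of $z_i$ or against one of the $y_j$ literals, producing optimal alignments of equal weight but different partitions inside the $z_i$-block, and thereby allowing an adversarial approximator to place $w'$ in between the two canonical patterns. I plan to handle this by extending the uniqueness modification to the auxiliary variables themselves --- for example arranging each $y_j$'s own block and its $\phi$-values so that the $y_j$, $\neg y_j$ occurrences must be absorbed into $y_j$'s slack phrase and cannot serve as satisfier of the tautological clause --- forcing the tautological clause word to be routed through $z_i$ on the side dictated by $z_i$'s value, and collapsing the $z_i$-block to exactly one of $P_{\mathrm{true}}$ or $P_{\mathrm{false}}$. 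Once that bookkeeping is fixed, the quantitative decoding argument above is routine.
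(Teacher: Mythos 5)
Your overall architecture --- amplify a single variable's block inside $e$, decode its value by nearest-neighbour/majority on that block, substitute and iterate --- is exactly the paper's, and your quantitative bookkeeping (choose the block length $\ell \geq (2R)^{1/\epsilon}$ so the total error budget cannot corrupt half the block) matches the paper's choice of $n^{k/\epsilon}$ copies with $n^k > 1.5m$. The gap is in the amplification gadget itself, and it is precisely the obstacle you flag at the end. Padding with $(z_i \vee y_j \vee \neg y_j)$ forces nothing: a weight-1 alignment can route every such clause word to the uncovered copy of $y_j$ or $\neg y_j$, leaving all $2K$ extra copies of $z_i$ inside the slack phrase, so there is a legitimate witness whose $z_i$-block is essentially all zeros under either truth value; since the theorem only promises closeness to \emph{some} witness, your decoder then learns nothing. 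Your proposed repair --- making the $y_j$ literals ineligible so the tautology must be routed through $z_i$ --- is unsound: it turns the $K$ clauses $(z_i \vee \cdots)$ and the $K$ clauses $(\neg z_i \vee \cdots)$ into, in effect, unit clauses demanding $K$ uncovered positive copies \emph{and} $K$ uncovered negative copies of $z_i$ simultaneously; but the slack word for $z_i$ must absorb one contiguous block containing all copies of at least one polarity, so uncovered copies can only ever be of a single polarity. The repaired instance therefore admits no weight-1 alignment at all, regardless of whether $F$ is satisfiable.

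The paper's gadget avoids both problems at once: it pads with $n^{k/\epsilon}$ copies of the two-literal clause $(v \vee \bar{v})$, with no auxiliary variables. Each such clause word can only be matched to a copy of $v$ or of $\bar{v}$, so in every valid alignment at least $n^{k/\epsilon}$ copies of one polarity are uncovered (a long run of 1s in the partition string) while the slack necessarily swallows the $\geq n^{k/\epsilon}$ copies of the other polarity (a long run of 0s); and since either polarity can serve the dummy clauses, satisfiability is unaffected. If you replace your padding by these dummy clauses (or by any gadget that genuinely forces one of the two runs without fixing the polarity in advance), the rest of your argument goes through essentially as written.
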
 

\begin{proof} 
We will show how to use such a structure approximation algorithm $A$ for PWSA to compute the exact value of the first variable in $F$, in a manner similar to the proof of Hamming inapproximability of SAT. 

Let $F$ be a formula on $n$ variables and $m$  clauses.  Choose $k$ such that $n^k >1.5m$.  Now, augment  $F$  with $n^{k/\epsilon}$ copies of the dummy clause $(v \vee \bar{v})$ to obtain a new formula $F'$.  If the reduction from theorem \ref{sat-wsa} is applied to this $F'$, it will have an effect of  introducing  $n^{k/\epsilon}$ copies of the literal $v$ and $n^{k/\epsilon}$ copies of the literal $\bar{v}$ as additional words of $e$ (that is, the first $n^{k/\epsilon} + p$ words of $e$ will be copies of $v$, and the following $n^{k/\epsilon}+q$ words of $e$ will be copies of $\bar{v}$, where $p$ and $q$ are the numbers of positive and negative occurrences of $v$ in the original $F$.)  The clauses $(v \vee \bar{v})$ will become $n^{k/\epsilon}$ new words in $f$ (say first $n^{k/\epsilon}$ words of $f$).  Finally, $\phi(e_{ij}, f_{kl})$ is defined as before with respect to the augmented formula.   This amplification preserves the correctness of the reduction, as  the link $(e_{ij},s_1)$ forces only copies of $v$ or only copies of $\bar{v}$  to be used to satisfy the dummy clauses.  Now,  if $w$ is a correct witness (of length $N=3m+2n^{k/\epsilon}-1$) to this instance, the value of $v$ can be determined immediately: if $w$ starts with a string of at least $n^{k/\epsilon}$ 1s, then $v=true$, and if $w$ starts with at least $n^{k/\epsilon}$ 0s, then $v=false$. 

Suppose that there is an algorithm $A$ that returns a ``corrupted'' string $w'$ which agrees with $w$ on at least $N/2+N^\epsilon$ bits. Here, we are not even concerned whether $w'$ is a valid alignment (i.e., has $|f|-1$ ones); any such $w'$ will work.   That is,  $w'$ agrees with $w$ on 
\( (3m+2n^{k/\epsilon}-1)/2 + (3m+2n^{k/\epsilon}-1)^\epsilon\) \( \geq (3m +2n^{k/\epsilon}-1)/2 + n^k\) positions. Now, suppose that all the errors lie within the $2n^{k/\epsilon}$ positions corresponding to extra copies of $v$ and $\bar{v}$.   Since we chose $k$ such that $n^k>1.5m$,  and ignoring $-1/2$, 
there are at least  $n^{k/\epsilon}+n^k-1.5m > n^{k/\epsilon}$ correct bits in that block, that is more than half of copies of $v$ and $\bar{v}$ are computed correctly. Taking majority now gives us the correct value of $v$. 
   \end{proof} 

This  result can be extended to show edit distance inapproximability of PWSA using the ideas from the edit distance inapproximability proof for  VertexCover. 

\begin{corollary} 
PWSA cannot be approximated in polynomial time to within edit distance  $n/2-n^\epsilon$ for any constant $\epsilon >0$ unless $P=NP$.  
\end{corollary}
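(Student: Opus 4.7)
The plan is to transplant the edit-distance argument of Theorem \ref{vc-edit} onto the amplified reduction used in Theorem \ref{pwsa-hamming}. The first step is to modify the amplified instance so that the natural witness $w$ carries the value of the distinguished variable $v$ in two well-separated homogeneous blocks, one at each end of $w$. I would do so by introducing into the amplified formula a fresh variable $u$ together with the clauses $(u \vee \bar v)$ and $(\bar u \vee v)$ that force $u = v$, adding a further $n^{k/\epsilon}$ dummy clauses $(u \vee \bar u)$, and ordering the variable blocks of $e$ so that $v$'s block is the first block in $e$ and $u$'s block is the last.

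With this construction, the natural witness $w$ begins with a run of at least $M = n^{k/\epsilon}$ copies of a bit $\sigma$ (coming from the between-positive positions of $v$'s padded block, which are all $1$ if $v = true$ and all $0$ if $v = false$) and ends with a run of at least $M$ copies of $\bar\sigma$ (coming from the between-negative positions of $u$'s padded block, which carry the opposite value because $u = v$: negatives lumped into a single slack phrase give $0$, while negatives used as singletons to satisfy clauses give $1$). The middle of $w$ between these two blocks is only of length $O(m)$, so the two homogeneous blocks occupy the two far ends of a string of length $N = \Theta(M)$.

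Once the witness has this shape, the argument of Theorem \ref{vc-edit} applies essentially verbatim. Each individual substitution, insertion, or deletion can introduce at most one wrong bit into the union of the two homogeneous end blocks: shifts that remain inside a homogeneous region preserve correctness, a single substitution in an end flips at most one bit, and an insertion or deletion at the boundary between the middle and an end pushes at most one bit of the ``wrong'' value into the end. Operations placed strictly in the middle do not reach either end within the available edit budget. Hence any candidate witness $w'$ at edit distance at most $N/2 - N^\epsilon$ from $w$ has at most $N/2 - N^\epsilon$ wrong bits distributed across the $2M$ informative end positions; with the choice $n^k > 1.5m$ inherited from Theorem \ref{pwsa-hamming} this leaves a strict majority of correct votes in each end, and reading the majority at either end recovers the value of $v$. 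A standard search-to-decision reduction then recovers the whole satisfying assignment in polynomial time, contradicting $P \neq NP$.

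The main obstacle I anticipate is the careful bookkeeping that underlies the ``one wrong bit per edit operation'' claim: one has to verify that an insertion or deletion in the middle of $w$ does not in fact shift content from one end into the other, and that operations local to one end do not simultaneously corrupt the other. Both points follow, exactly as in Theorem \ref{vc-edit}, from the fact that the two homogeneous blocks are separated by the middle of $w$, that each edit is a local move, and that any shift that does reach an end introduces a recognisable wrong bit because the two ends are homogeneous with opposite values.
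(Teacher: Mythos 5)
Your overall strategy --- reuse the amplified instance of Theorem \ref{pwsa-hamming} and transfer the ``an insert/delete pair introduces at most one wrong bit per homogeneous run'' accounting from Theorem \ref{vc-edit} --- is exactly the paper's, but the gadget you add to set it up breaks the parameter accounting. The paper needs no modification at all: in the instance of Theorem \ref{pwsa-hamming} the block $w_1\dots w_r$ for the amplified variable $v$ is \emph{already} of the form $1^{a}0^{b}$ or $0^{a}1^{b}$ with $a,b\ge n^{k/\epsilon}$, i.e.\ two adjacent homogeneous runs that occupy all but $O(m)$ of the witness; the two-run edit accounting then gives at most $N/2-N^\epsilon < n^{k/\epsilon}$ errors inside a block of $\approx 2n^{k/\epsilon}$ positions, and the Hamming argument goes through.

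The genuine gap in your version is quantitative. By adding the mirrored variable $u$ with its own $n^{k/\epsilon}$ dummy clauses, you make $N\approx 3m+4n^{k/\epsilon}$, while the two end runs you vote over contain only about $2n^{k/\epsilon}$ positions --- roughly \emph{half} of the witness rather than almost all of it. The adversary's budget is $N/2-N^\epsilon\approx 2n^{k/\epsilon}-n^{k}$, which exceeds $n^{k/\epsilon}$ (half of your informative region) for every $\epsilon<1$; e.g.\ $2n^{k/\epsilon}-n^{k}$ replacements placed entirely inside the two end runs corrupt a majority of them, so ``a strict majority of correct votes in each end'' does not follow, and the choice $n^k>1.5m$ does not rescue it. You cannot repair this by also voting over the interior runs (the $\bar\sigma$-half of $v$'s block and the $\sigma$-half of $u$'s block): that gives four homogeneous runs, an insert/delete pair can then damage one position in each, the error bound doubles to $N-2N^\epsilon$, and the count fails again. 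The argument requires the informative region to exceed $N-2N^\epsilon$ positions while consisting of only two homogeneous runs; the unmodified instance satisfies both conditions, and your modified one satisfies neither. Dropping the $u$ gadget entirely and observing the shape of the existing $v$-block is the fix.
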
 
\begin{proof} 
We will use the same class of  instances as in theorem \ref{pwsa-hamming}. Note that the substring of $w$ that we are interested in is $w_1 \dots w_r$, where $r=2n^{k/\epsilon}+p+q$, which is the block corresponding to the first variable $v$ in $F$.  In a correct witness, this substring is either of the form $1111....000000$ or $000....11111$, with the number of 0s and 1s at least $n^{k/\epsilon}$ each.  Now, suppose an approximation algorithm $A$ produces a string $w'$ which is edit distance $N/2-N^\epsilon$ of $w$; that is, $w'$ can be converted to $w$ with at most $N/2 + N^\epsilon$ insertion, deletion and replacement operations.  Consider a substring $w'_1\dots w'_r$  in $w'$.  As for the case of VertexCover,  we can argue that the Hamming distance between $w_1 \dots w_r$ and $w'_1  \dots w'_r$ is at most $N/2-N^\epsilon$.  Indeed, suppose for the sake of contradiction that the Hamming distance between $w_1 \dots w_r$ and $w'_1 \dots w'_r$ is greater than the edit distance between these two substrings. As they have the same size, the number of insertions is the same as the number of deletions.  Now, it is sufficient to say that the pair insertion/deletion can introduce at most one 0  in the ``1111...1'' part, and at most one 1 in the ``0000..000'',  by the same argument as in theorem \ref{vc-edit}.  Therefore, the Hamming distance inapproximability implies edit distance inapproximability with the same parameters. 
\end{proof} 

In the proofs above, we have shown inapproximability results for the problem PWSA, in which the second sentence is assumed to be partitioned  as one word per phrase. A more realistic scenario would be to assume that the witness consists of the partition strings for both $e$ and $f$ (here, we are still assuming that $\phi$ takes values in $\{0,1\}$). The corollary below shows that for a weaker bound, there is still an inapproximability. The weakening here comes from the fact that our block becomes a smaller fraction of the total length of the witness, since  $f$ contains $n^{k/\epsilon}$  words corresponding to the dummy clauses. 
  
\begin{corollary} 
WSA with $\phi \in \{0,1\}$ cannot be approximated to within Hamming distance or edit distance $2n/3+n^\epsilon$ for any constant $\epsilon >0$. 
\end{corollary}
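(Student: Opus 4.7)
The plan is to re-run the amplified 3SAT reduction from Theorem \ref{pwsa-hamming}, but now treat the witness as the concatenation of the $e$-partition string (length $|e|-1 = 3m + 2n^{k/\epsilon} - 1$) and the $f$-partition string (length $|f|-1 = m + n + n^{k/\epsilon} - 1$), for a total witness length $N = 4m + n + 3n^{k/\epsilon} - 2$. In the target witness, the $f$-partition is the all-ones string (each word of $f$ is its own phrase), and the $e$-partition again contains the distinguished block of $2n^{k/\epsilon}$ bits corresponding to the amplified literal $v / \bar v$, which in a correct witness has the form $1^{n^{k/\epsilon}}0^{n^{k/\epsilon}}$ or $0^{n^{k/\epsilon}}1^{n^{k/\epsilon}}$ and therefore directly reveals the value of the first variable of $F$.

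Next I would repeat the majority-on-the-block argument: if $w'$ is returned by the approximation algorithm, then the value of $v$ is recovered correctly as long as strictly fewer than $n^{k/\epsilon}$ of the $2n^{k/\epsilon}$ block positions are corrupted. In the worst case all errors are concentrated inside the block, so we need the total number of disagreements between $w$ and $w'$ to be strictly less than $n^{k/\epsilon}$. Choosing $k$ large enough that $n^{k/\epsilon}$ dominates $4m+n$ (so that the block accounts for essentially $2/3$ of the total witness), any agreement level of $2N/3 + N^{\epsilon}$ (equivalently, distance at most roughly $N/3 - N^{\epsilon}$) forces this majority condition, and the value of $v$ can then be read off by majority vote on the block. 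Recursing as in \cite{SY13} yields a full satisfying assignment, contradicting $\mathsf{P} \neq \mathsf{NP}$.

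The edit-distance version follows from the same route taken in Theorem \ref{vc-edit} and the previous corollary. Since inside the block the true witness is a monotone string of 1s followed by 0s (or vice versa), each insert/delete pair can introduce at most one stray 0 into the 1s sub-block and at most one stray 1 into the 0s sub-block, so the number of corrupted positions inside the block is bounded by the total number of edit operations applied. Hence the Hamming argument lifts to edit distance with the same parameters, covering both bounds in the statement simultaneously.

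The main obstacle is bookkeeping the constants: one must pick the amplification exponent $k$ large enough (relative to $\epsilon$ and to $m,n$) that $n^{k/\epsilon}$ not only dominates $4m+n$ but also satisfies $n^{k/\epsilon} - O(m+n) \geq N^{\epsilon}$, so that the asserted $N^{\epsilon}$ slack in the agreement bound indeed leaves a strict majority inside the block. Aside from that, all pieces, the $f$-partition being rigidly all-ones in the target, the block structure, the insert/delete-shift bound, are inherited directly from the already-established PWSA results.
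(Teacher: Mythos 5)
Your proposal is correct and follows essentially the same route as the paper: the witness now has length $N \approx 4m+n+3n^{k/\epsilon}$, the amplified block of $2n^{k/\epsilon}$ positions is roughly a $2/3$ fraction of it, so agreement on $2N/3+N^\epsilon$ positions still leaves a strict majority of the block intact, and the edit-distance case is inherited from the insert/delete-pair bound of Theorem~\ref{vc-edit}. One small slip in your bookkeeping paragraph: the side condition you write, $n^{k/\epsilon}-O(m+n)\geq N^\epsilon$, is stated backwards --- what is actually needed is $N^\epsilon > (4m+n)/3$ (so that the $N^\epsilon$ slack absorbs the non-block part of $N/3$), and this holds because $N^\epsilon \geq n^{k}$ and $k$ was already chosen with $n^k>1.5m$.
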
 
\begin{proof} 
Consider the same reduction as before, but now the witness is of length $|e|+|f|$ and encodes partition into phrases of $f$ as well as of $e$.  Thus, the total length $N$ of the witness becomes, ignoring ``-1''s,  $N=(3m+2n^{k/\epsilon})+(n^{k/\epsilon}+m+n)$ $=4m+3n^{k/\epsilon}+n$.   If the calculation above is done with this value of $N$, then we end up with  only  $0.5 n^{k/\epsilon}$ guaranteed correct positions in our $2n^{k/\epsilon}$ block of interest. 
We need $c$, $0<c<1$, such that $N*c + N^\epsilon - (N-2n^{k/\epsilon}) > n^{k/\epsilon}$; choosing $c=2/3$ satisfies this condition. 

\end{proof} 

\section{Conclusions} 

In this paper we have considered the problem of approximating solution structure for the weighted sentence alignment problem and its phrase-to-word variant.  We have shown that  a partition of a source string into phrases for which there is an optimal alignment  is hard to approximate to within Hamming distance or edit distance $n/2+n^\epsilon$ for all $\epsilon$, where $n$ is the length of the source string. We adapted the framework of \cite{HMRW07} and the techniques of \cite{SY13} for this task, in particular showing how the Hamming distance results of \cite{SY13} can be extended to edit distance for several problems. 

Additionally, the discussion of the most compact representation of the solutions to WSA and its variants suggests a direction for the parameterized complexity analysis of this problem. The ``source of intractability'' there seems to be the partitioning task.  It is known, for example, that limiting the distance, in terms of position,  at which the linked phrases can be (generalizing the ``monotone WSA'', where the alignment must preserve the order of phrases) allows the problem to be solved in polynomial time by a dynamic programming algorithm \cite{denero-thesis}.  Can limiting the number of phrases or the length of phrases give a fixed-parameter tractable algorithm for WSA or would it be W[1]-hard?  Note that limiting both the number and the length of phrases does give an FPT algorithm,  but it is not interesting since bounding both puts a limit on the length of the string itself.  Another note is that the reduction from Vertex Cover contains a block of $k'=n-k$  $t$-words; thus, considering it a reduction from $k'$-independent set, the parameter $k'$ suggests W[1]-hardness. However, this does not give a natural parameter of WSA corresponding to $k'$, as the length of $f$ depends on the size of the graph. Yet another parameter that can be considered,  in the $\{0,1\}$ framework, would be the maximal  number of links  of weight 1 per phrase. As real-world sentences to be translated tend to be of restricted types, such parameterized analysis  may explain the success of heuristics and integer linear programming approach to solving WSA. 

The analysis of the approximation algorithms based on the integer linear programming formulation of the WSA used by \cite{phrase-align} and others is another interesting question.  Is there a linear programming-based or SDP approximation algorithm for WSA?  And would an approximation produced by such algorithm agree with the elements of the optimal solution enough to give a matching upper bound for the approximating solution structure (as it is for weighted MinVertexCover \cite{SY13})?   Here we did not go into details of the underlying statistical models, rather working in the simplified bijective setting of \cite{phrase-align}.  How would such upper bounds apply in a more general context of phrase alignment problems,  both with respect to optimality conditions and the requirement that alignment has to be bijective?

 Finally, in this paper we considered the weighted sentence alignment problem and distance functions Hamming distance and edit distance. Exploring the setting of structure approximation further,  it would be interesting to see if there is a generic way to build a lattice of hardness implications for various metrics. We conjecture,  in particular, that any metric with a certain ``locality property'' (that is, one ``unit of change'' only affects a small, though not necessarily constant number of positions)  should be inapproximable by generalizing Hamming distance results.  Alternatively,  one wonders if there is a non-trivial, practically interesting metric for which there is, indeed, a fast approximation algorithm for any NP-hard problem.  In that respect, considering various metrics and their interrelation with respect to  computational problems is a promising area with a possibility for new approaches to computational problems from a wide variety of  fields.

 \section{Acknowledgements} 
 We are very grateful to Valentine Kabanets, Todd Wareham and Russell Impagliazzo for numerous discussions and suggestions, and to Venkat Guruswami for telling us about then-unpublished work of Sheldon and Young.

\bibliographystyle{alpha}
\bibliography{strapp}

\end{document}